\newtheorem{definition}{Definition}
\newtheorem{lemma}{Lemma}
\newtheorem{remark}{Remark}
 \newcommand{\bW}{\mathbf{W}}
\newcommand{\bM}{\mathbf{M}}
\newcommand{\bZ}{\mathbf{Z}}
\title{EGGS-PTP: An Expander-Graph Guided Structured Post-training Pruning Method for Large Language Models}
\author{%
  Omar Bazarbachi \\
  Department of Electrical Engineering and Computer Science\\
  University of California, Irvine\\
  Irvine, CA 92697 \\
  \texttt{obazarba@uci.edu} \\
  \And
  Zijun Sun \\
  Department of Electrical Engineering and Computer Science \\
  University of California, Irvine\\
  Irvine, CA 92697 \\
  \texttt{zijuns6@uci.edu} \\
  \AND
  Yanning Shen \\
  Department of Electrical Engineering and Computer Science \\
  University of California, Irvine\\
  Irvine, CA 92697 \\
  \texttt{yannings@uci.edu} \\
}
\begin{document}

\maketitle

\begin{abstract}
  As Large Language Models (LLMs) become more widely adopted and scale up in size, the computational and memory challenges involved in deploying these massive foundation models have grown increasingly severe. This underscores the urgent need to develop more efficient model variants. Faced with this challenge, the present work introduces EGGS-PTP: an Expander-Graph Guided Structured Post-training Pruning method. The proposed approach leverages graph theory to guide the design of N:M structured pruning, effectively reducing model size and computational demands. By incorporating concepts from expander graphs, EGGS-PTP ensures information flow within the pruned network, preserving essential model functionality. Extensive numerical experiments demonstrate that EGGS-PTP not only achieves significant acceleration and memory savings due to structured sparsity but also outperforms existing structured pruning techniques in terms of accuracy across various LLMs.
\end{abstract}

\section{Introduction}

The rapid advancement in Large Language Models (LLMs) has revolutionized natural language processing (NLP), achieving remarkable performance across a wide range of language tasks. However, this progress comes with significant computational and memory demands, which poses substantial challenges for deploying these models in resource-constrained environments. As LLMs continue to scale, the development of efficient compression techniques becomes increasingly critical to democratize their usage and improve accessibility. To address these challenges, various approaches have been proposed, including model quantization \cite{bai2020binarybert, frantar2022gptq,lin2024awq,xiao2023smoothquant}, and model pruning \cite{obs,obd, mocanu2018scalable,sun2023wanda,zhang2024plugandplay}.

In particular, model pruning has emerged as the most prominent approach to reduce the size of neural networks by removing less critical weights while maintaining performance. Among these methods, post-training pruning (PTP) stands out for its simplicity and efficiency \cite{bai2024sparsellm,frantar2023sparsegpt,lu2024alphapruning, sun2023wanda,wang2024pruning,zafrir2021prune,zhang2024plugandplay}, as it operates directly on pre-trained models without the need for extensive retraining. Despite its potential, existing pruning methods such as Wanda pruning \cite{sun2023wanda} and RIA pruning \cite{zhang2024plugandplay} struggle to maintain performance under structured sparsity constraints such as the $N\!:\!M$ sparsity  pattern—a structure specifically designed for hardware acceleration on GPUs \cite{mishra2021acceleratingsparsedeepneural}.

In this paper, we introduce EGGS-PTP, an Expander-Graph Guided Structured Post-training Pruning method for LLMs. The proposed technique combines the $N\!:\!M$ sparsity constraint 
with the topological advantages of expander graphs, whose sparse yet highly connected structure facilitates robust information flow even under substantial pruning \cite{hoang2023revisiting,prabhu2018deep,spectralGraphs}. To effectively preserve the most critical weights, EGGS-PTP further incorporates the principle of Relative Importance and Activations ($\mathbf{RIA}$) \cite{zhang2024plugandplay}. Built upon the integration of these techniques, EGGS-PTP achieves a balance between computational efficiency and model performance.

Overall, our contributions can be summarized as follows:
\begin{itemize}
    \item We propose EGGS-PTP, a novel post-training structured pruning framework inspired by expander graph theory to enable efficient and effective compression of LLMs.
    \item EGGS-PTP consists of two pruning strategies: importance-aware and structure-aware pruning, which focus on retaining weights with higher \emph{quantitative} and \emph{qualitative} importance, respectively. The two strategies with specific emphasis enable the proposed algorithm to preserve important weights and maintain information flow at the same time.
    \item The proposed EGGS-PTP also ensures $N\!:\!M$ sparsity that aligns with modern hardware acceleration constraints.
    \item Extensive experiments demonstrate that EGGS-PTP consistently outperforms existing structured pruning methods in both accuracy and efficiency.
\end{itemize}
\section{Related Works}

\subsection{Neural Network Pruning}
To reduce computational costs and memory usage without significantly compromising performance, pruning has become a widely adopted technique for compressing deep neural networks. Pre-training pruning removes weights before training, which is lightweight but often results in suboptimal performance and requires retraining \cite{hoang2023revisiting,lee2018snip,wang2020picking}. During-training pruning introduces substantial training complexity and overhead \cite{huang2025pruning,xing2025efficientllm}. In contrast, post-training pruning is particularly appealing for large-scale LLMs, as it can yield high-performing sparse models with lower computational costs \cite{bai2024sparsellm,frantar2023sparsegpt,lu2024alphapruning, sun2023wanda,wang2024pruning,zafrir2021prune,zhang2024plugandplay}. For these reasons, this work adopts a post-training pruning framework for LLMs.

\subsection{Post-training Pruning}



This work specifically focuses on post-training pruning (PTP), a promising approach for compressing LLMs due to its efficiency and simplicity \cite{bai2024sparsellm,frantar2023sparsegpt,lu2024alphapruning,sun2023wanda,wang2024pruning,zafrir2021prune,zhang2024plugandplay}. Unlike training-aware methods—such as sparse training \cite{mocanu2018scalable, sanh2020movement} or pruning-aware training \cite{han2015learning, liu2021sparse}—PTP operates directly on pre-trained models without requiring additional fine-tuning. This makes it a computationally efficient solution, especially suitable for resource-intensive LLMs.
Classical approaches like Optimal Brain Damage (OBD) \cite{obd} and Optimal Brain Surgeon (OBS) \cite{obs} leverage the Hessian matrix to identify and remove less important weights in Neural Networks. Building on these foundations, recent methods such as Wanda \cite{sun2023wanda} and RIA \cite{zhang2024plugandplay} have adapted pruning strategies to better suit LLMs, offering reduced complexity while maintaining competitive performance. Wanda improves one-shot pruning by incorporating input activations, while RIA introduces a novel metric to more effectively estimate weight importance.
Despite these advancements, there remains substantial potential to further enhance both pruning accuracy and efficiency. To this end, we propose a novel PTP method guided by expander graphs, which improves information flow across layers and yields superior performance compared to existing approaches.

\section{Preliminaries and Problem Formulation}

Post-training pruning (PTP) typically modifies a pre-trained model by directly altering its weight matrices, without requiring end-to-end fine-tuning. Consider a linear layer in LLM with an input $\mathbf{Z}^{\ell-1} \in \mathbb{R}^{F_{\ell-1}}$, an output $\mathbf{Z}^{\ell}\in\mathbb{R}^{F_{\ell}}$, and a weight matrix $\mathbf{W^\ell} \in \mathbb{R}^{F_{\ell} \times F_{\ell-1}}$, where $F_{\ell}$ and $F_{\ell-1}$ denote the number of output and input channels, respectively. Hence, the operation in the $\ell$-th layer can be represented as 
\begin{align}
    \bZ^{\ell}=\bW^{\ell}\bZ^{\ell-1}.
\end{align}


Given a pretrained LLM with weight matrices $\{\bW^\ell\}$, we will adopt the $N\!:\!M$ sparsity constraint to enable efficient structured pruning \cite{nvidia_a100_whitepaper}. To implement this, a binary pruning mask $\mathbf{M} \in \{0,1\}^{F_{\ell} \times F_{\ell-1}}$ is constructed, where $\mathbf{M}_{ij} = 1$ indicates that the corresponding weight $\mathbf{W}_{ij}$ is retained, and $\mathbf{M}_{ij} = 0$ indicates that it is pruned. To satisfy the $N\!:\!M$ sparsity pattern, each group of $M$ elements in a row of $\mathbf{M}$ must contain exactly $N$ zeros and $M\!-\!N$ ones.
The pruned weight matrix is then given by:
\begin{equation}\label{equa:pruned}
    \tilde{\mathbf{W}} = \mathbf{W^{\ell}} \odot \mathbf{M},
\end{equation}
where $\odot$ denotes element-wise multiplication.

In this work, the goal is to take advantage of both $N\!:\!M$  sparsity and the strong connectivity of expander graphs to construct an effective binary mask $\mathbf{M}$, enabling efficient pruning while preserving robust model performance.

\section{Importance Metrics and Preprocessing} 

We first introduce the importance metrics that will be used in developing the EGGS-PTP framework as well as the necessary preprocessing steps for the weight matrix.

\subsection{Importance Metrics}
To effectively assess the weight importance for guiding the subsequent pruning process, we introduce two importance metrics: $\mathbf{RRI}$ and $\mathbf{RIA}$. These metrics quantify how critical each weight is to model performance, enabling informed decisions during pruning.

\paragraph{Row Relative Importance (RRI).}\label{}
$\mathbf{RRI}$ is used to quantify the importance of each weight with respect to the corresponding row. Formally, 
 $\mathbf{RRI}_{ij}$ is obtained by normalizing the magnitude of each weight $\mathbf{W}_{ij}$ with the sum of the absolute values of the weights in the corresponding row:
\begin{equation}
    \mathbf{RRI}_{ij} = \frac{|\mathbf{W}_{ij}|}{\sum_{k=1}^{F_{\ell-1}}|\mathbf{W}_{ik}|}.\label{eq:rri}
\end{equation} \label{rri}
Note that the layer index $\ell$ was omitted for simplicity.
\paragraph{Relative Importance and Activation (RIA).}
$\mathbf{RIA}$ is a metric introduced to combine the relative importance of the neural network weights and activation information \cite{zhang2024plugandplay}. To quantify the significance of each weight $\mathbf{W}_{ij}$ relative to the corresponding row $i$ (output channel) and column $j$ (input channel), the weights are normalized using the absolute sum of the corresponding column and row. Furthermore, to incorporate the input activation, which provides additional information about the actual contribution of weights to the network's outputs, the $\ell_2$ norm of the corresponding input $\| \mathbf{Z}^{\ell-1}_j \|$ is utilized. Hence, the $\mathbf{RIA}$ score corresponding to the weight $\mathbf{W}_{ij}$ can be written as: \begin{equation}\label{ria}
\mathbf{RIA}_{ij} \!\!= 
\bigg(\! \!
\underbrace{\frac{|\mathbf{W}_{ij}|}{\sum_{k=1}^{F_{\ell-1}}|\mathbf{W}_{ik}|}}_{\text{(a)}} 
+ 
\underbrace{\frac{|\mathbf{W}_{ij}|}{\sum_{k=1}^{F_{\ell}}|\mathbf{W}_{kj}|}}_{\text{(b)}} 
\!\!\bigg) \!\!\! 
\underbrace{\| \mathbf{Z}^{\ell-1}_j \|_2^\alpha}_{\text{input activation}},
\end{equation}
where (a) denotes the relative importance with respect to (w.r.t.) row $i$ (see Eq.~\eqref{eq:rri}),  and (b) represents the relative importance w.r.t. column $j$. The parameter $\alpha$ controls the strength of activations.

\begin{figure*}[t]
    \centering
    \includegraphics[width=0.85\textwidth]{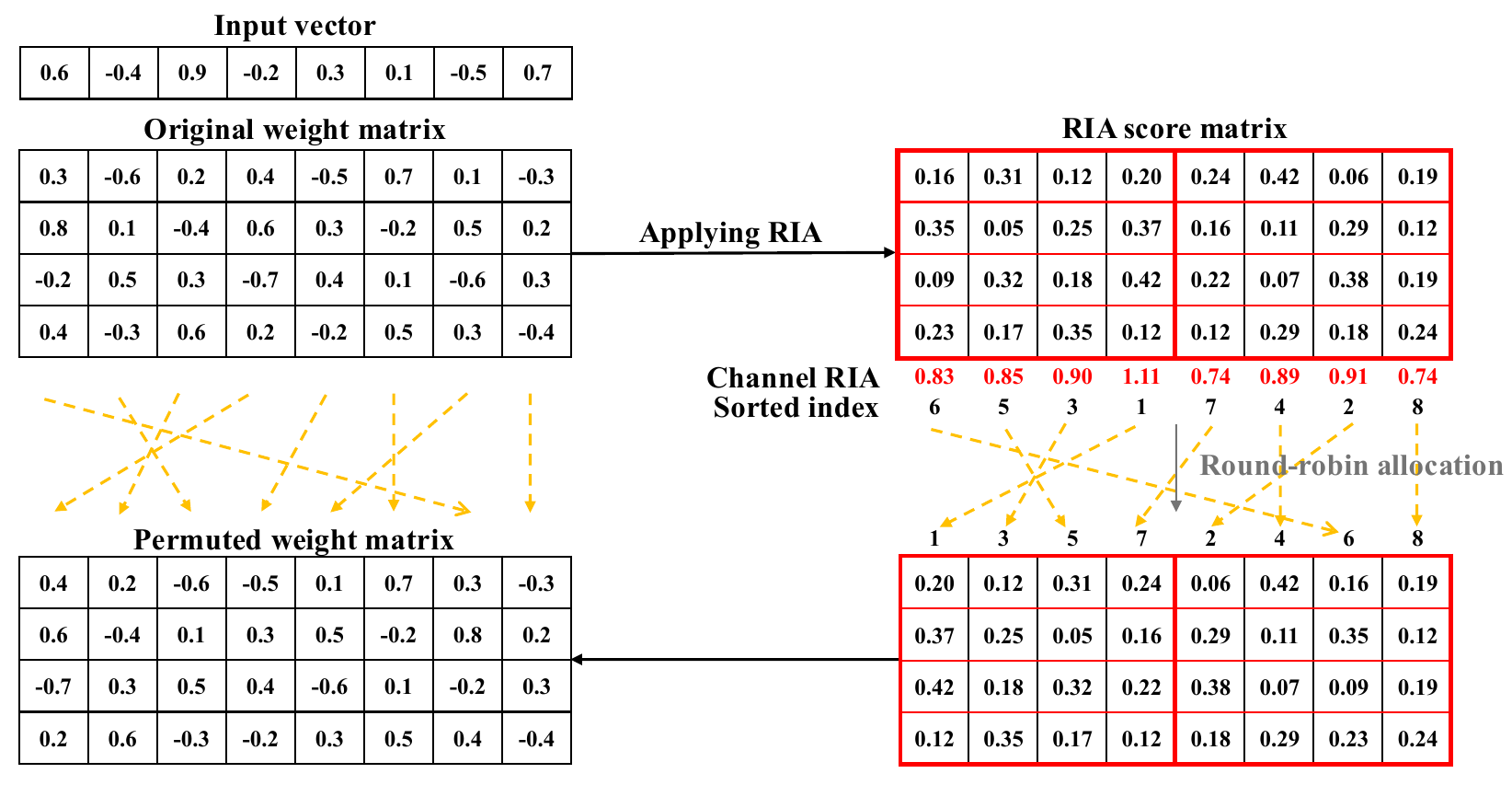}
    \caption{Example of channel permutation on a $4 \times 8$ weight matrix based on $2\!:\!4$ sparsity.}
    \label{Channel-permutation}
\end{figure*}

\subsection{Pruning Group Partition and Channel Permutation }\label{channel_permutation}
Note that $N\!:\!M$ pruning occurs row-wise within each {\bf pruning group} of the weight matrix $\mathbf{W}$, where a pruning group under the $N\!:\!M$ sparsity constraint refers to the submatrix consisting the $kM+1$ to $(k+1)M$ columns of $\mathbf{W}$, with $k \in \left[0, \frac{F_{\ell-1}}{M} - 1\right]$.
However, such a pruning structure can lead to suboptimal pruning results when important weights are in the same pruning group, causing critical weights to be pruned mistakenly and trivial weights retained. To alleviate this, a channel permutation step is introduced to more evenly distribute important weights \cite{zhang2024plugandplay}.


Given $\mathbf{RIA}$ scores of all weights according to Eq.~\eqref{ria}, we aggregate them column-wise to obtain a channel $\mathbf{RIA}$ for each input channel $\mathbf{W}_{*,\,j}$. The columns (input channels)  are then sorted based on the channel $\mathbf{RIA}$ in descending order and reassigned to pruning groups using a round-robin allocation strategy proposed in \cite{zhang2024plugandplay}. Specifically, the $\frac{F_{\ell-1}}{M}$ input channels with the highest channel $\mathbf{RIA}$ are alternately allocated to the $\frac{F_{\ell-1}}{M}$ pruning groups.  This process is iterated for $M$ times until all input channels are assigned.  Channel permutation effectively reduces the likelihood that important weights are clustered in the same group. 

Figure~\ref{Channel-permutation} illustrates an example in which $2:4$ sparsity is applied to a $4 \times 8$ weight matrix. Under a $2\!:\!4$ sparsity pattern with 4 output channels and 8 input channels, the weight matrix is divided into 2 pruning groups. The top $1$ and top $2$ input channels are assigned to group $1$ and $2$, respectively, and the rest of the channels are distributed to the groups following the same strategy.

Note that all subsequent pruning steps are performed on the weight matrix after channel permutation. For notational simplicity, and with slight abuse of the notation, we will still use $\mathbf{W
}$ as the permuted weight matrix in the following sections. 

\section{Expander-Graph Guided Structured Pruning}

In this section, we present EGGS-PTP: an {\bf E}xpander-{\bf G}raph {\bf G}uided {\bf S}tructured {\bf P}ost-{\bf T}raining {\bf P}runing Method for LLMs. It enjoys the computational efficiency resulting from structured pruning, while at the same time, maintains efficient information flow enabled by expander graphs.


The EGGS-PTP framework is designed to retain the important weights while also maintaining information flow in the pruned neural networks. Specifically, the information flow is preserved by fostering the expansion property in the underlying bipartite graphs of the pruned linear layers. The expansion property is particularly desirable in sparse neural networks because it guarantees that any small subset of neurons remains well-connected within the network even after pruning \cite{hsieh2024explicit}. In this way, the risk of performance degradation caused by channel corruption is mitigated. In the following subsections, we will further explain how to model the linear layer as a graph, as well as the design of our novel structured-pruning strategies.


\begin{figure*}[t]
    \centering
    \includegraphics[width=0.85\linewidth]{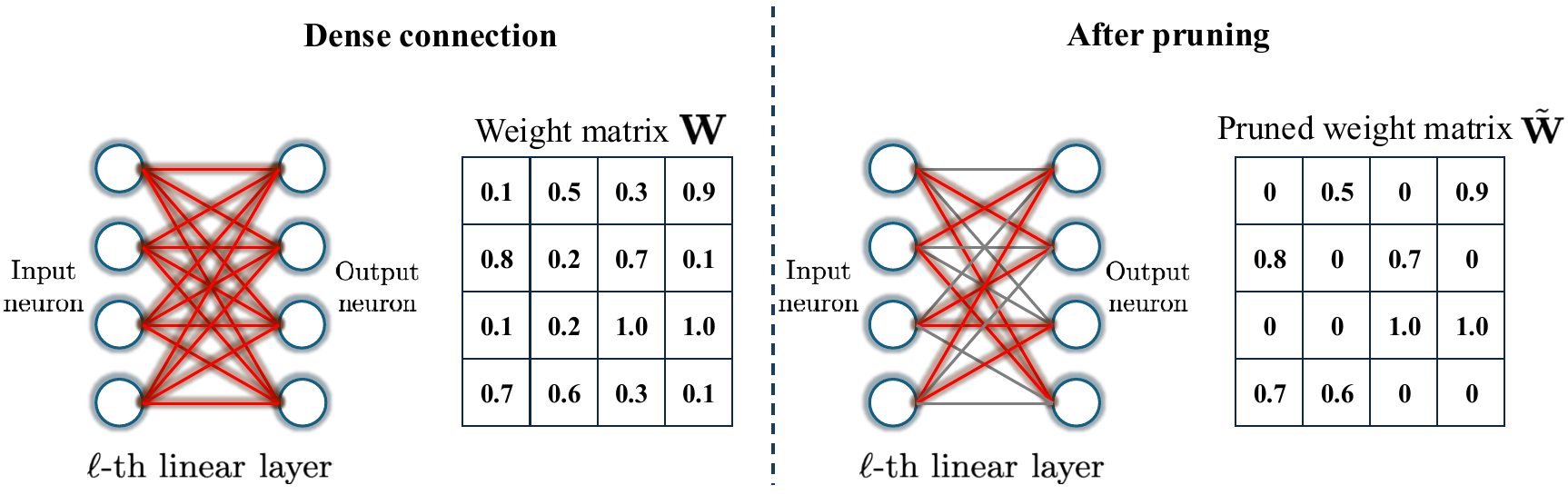}
    \caption{Example of modeling the $\ell$-th linear layer with $4$ input neurons and $4$ output neurons as a bipartite graph. The $4\times4$ weight matrix can be viewed as a submatrix of the $8\times8$ adjacency matrix of the bipartite graph, and pruning can be regarded as edge selection in the bipartite graph.}
    \label{graph-modeling}
\end{figure*}

\subsection{Graph-based Modeling}\label{graph-based-modeling}
Before diving into the pruning algorithm, we first introduce the definitions of bipartite graphs and two-sided expanders.
\begin{definition}
A bipartite graph $\mathcal{G} = (\mathcal{I} \cup \mathcal{O}, \mathcal{E})$ is a graph in which the vertex set is divided into two disjoint subsets $\mathcal{I}$ and $\mathcal{O}$, such that every edge connects a vertex $v_i\in\mathcal{I}$ to a vertex $v_o\in\mathcal{O}$ \cite{asratian1998bipartite}.
\end{definition}

\begin{definition}\label{def:expander}
A bipartite graph $\mathcal{G} = (\mathcal{I} \cup \mathcal{O}, \mathcal{E})$ is a two-sided $(c,a_I,a_O)$-expander if there exists $c\in(0,1)$ such that for $\forall \mathcal{T} \subseteq \mathcal{I}$ with $|\mathcal{T}|\leq c|\mathcal{I}|$, the neighborhood of $\mathcal{T}$, denoted as $\Gamma(\mathcal{T})\subseteq \mathcal{O}$, satisfies \cite{hsieh2024explicit}:
\begin{equation}\label{def:ai}
    |\Gamma(\mathcal{T})|\geq a_I|\mathcal{T}|, ~~ a_I>1.
\end{equation}Symmetrically, for $\forall \mathcal{S}\subseteq \mathcal{O}$ with $|\mathcal{S}|\leq c|\mathcal{O}|$, the neighborhood $\Gamma(\mathcal{S})\subseteq\mathcal{I}$ satisfies:
\begin{equation}\label{def:ao}
    |\Gamma(\mathcal{S})|\geq a_O|\mathcal{S}|, ~~ a_O>1.
\end{equation}
\end{definition}


We model each linear layer as a bipartite graph $\mathcal{G} = (\mathcal{I} \cup \mathcal{O}, \mathcal{E})$, where $\mathcal{I}$ denotes the set of $|\mathcal{I}|=F_{\ell-1}$ input neurons and $\mathcal{O}$ denotes the set of $|\mathcal{O}|=F_{\ell}$ output neurons. Hence, the weight matrix $\bW \in \mathbb{R}^{F_{\ell}\times F_{\ell-1}}
$ can be viewed as the submatrix of the adjacency matrix of the bipartite graph $\mathcal{G}$, where $\mathbf{W}_{ij}$ denotes the edge weight between the vertex $v_i$ and $v_j$ (as illustrated in Figure~\ref{graph-modeling}). In this way, pruning can be considered as edge selection in the bipartite graph. Specifically, $\bM_{ij}=0$ means the corresponding edge is pruned, and hence $\tilde{\mathbf{W}}_{ij}=0$. On the other hand, the weight is retained as $\tilde{\bW}_{ij}=\bW_{ij}$ if $\bM_{ij}=1$ (see Eq.~\eqref{equa:pruned}). In the case where the weights are pruned to ensure $N\!:\!M$ sparsity, each output neuron is henceforth guaranteed to have $M-N$ neighbors in each pruning group of $M$ input neurons.

In order to achieve $N\!:\!M$ sparsity while avoiding significant performance degradation, two requirements should be met:\\
\textbf{r1)} The retained edges should be selected such that the important edges are preserved.\\ 
\textbf{r2)} The resulting graph after pruning should remain ``well-connected'' to preserve information flow from layer to layer.

To meet requirement \textbf{r1)},
a simple solution in existing works \cite{zhang2024plugandplay,sun2023wanda,frantar2023sparsegpt} is first identifying the $M-N$ most important weights in each pruning group based on importance metrics, and directly setting the corresponding mask to $1$, and $0$ otherwise. This can be summarized as the Importance-aware Pruning detailed in the next subsection.

\subsection{Importance-aware Pruning} This strategy aims to retain the top $M-N$ most important weights based on the $\mathbf{RIA}$ scores defined in Eq.~\eqref{ria}.
 Specifically, for a row in each pruning group, we retain the top $M\!-\!N$ weights with the highest $\mathbf{RIA}$ scores by setting the corresponding elements in $\mathbf{M}$ to $1$, ensuring that the most influential connections are preserved. 
 By focusing on the preservation of high-impact weights, this strategy helps maintain the model’s performance after pruning.

Nevertheless, such pruning may not necessarily meet \textbf{r2)}. While each output neuron is guaranteed to have $M-N$ neighbors in each group of $M$ input neurons due to the definition of $N:M$ sparsity, the connectivity of input neurons can be arbitrary. In the case where all weights corresponding to an input channel/neuron have low importance scores, it is possible that most or even all of its edges are pruned. This causes input channel corruption and hinders information flow over the graph \cite{zhang2024plugandplay}. Note that channel corruption is harmful for LLMs since each channel typically encodes unique information critical to model performance.
To tackle this challenge, a connectivity-aware pruning strategy will be introduced next to preserve the underlying connectivity.

\begin{figure}[t]
    \centering
    \includegraphics[width=0.25\linewidth]{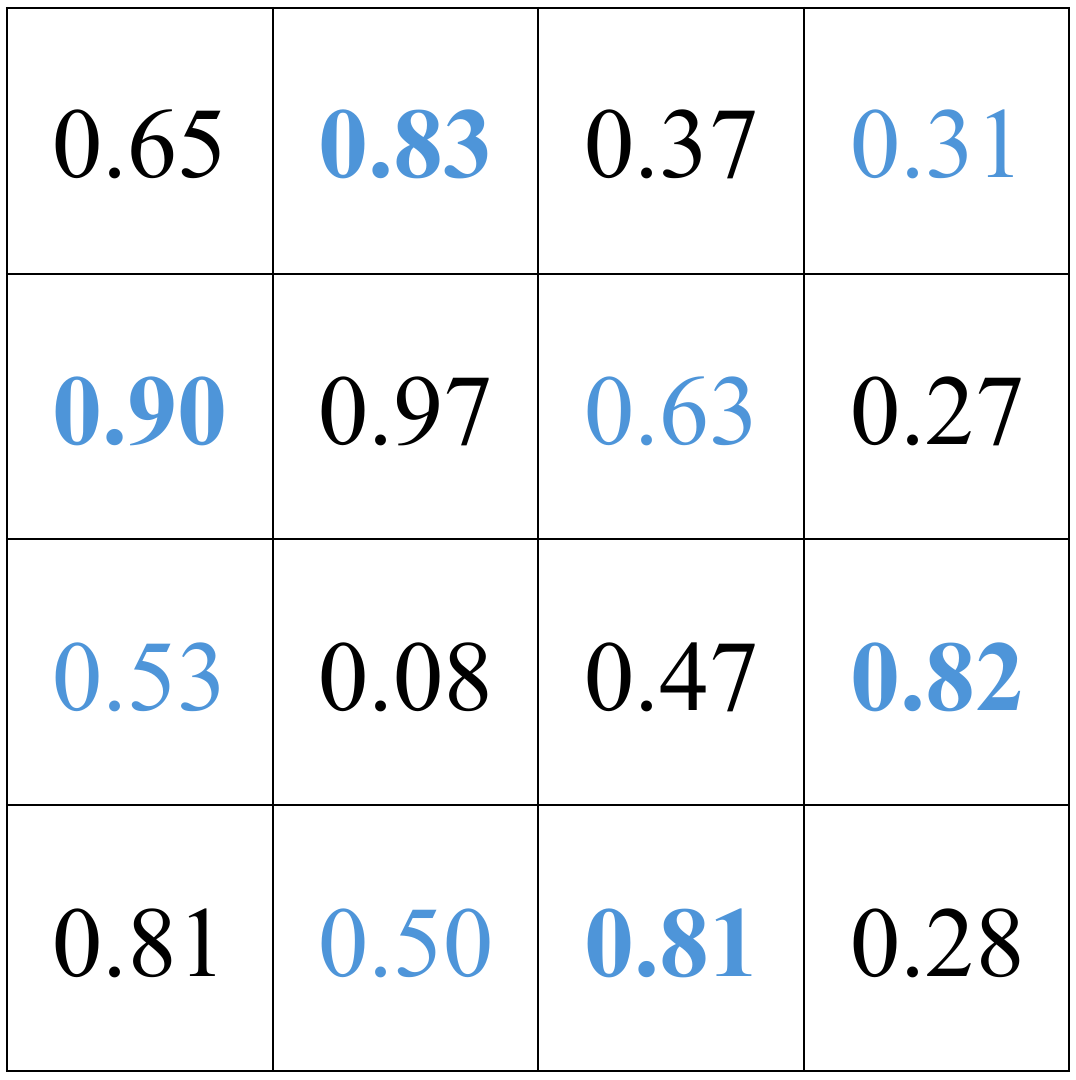}

    \caption{Example of the diagonal selection.}
    \label{Selection-Matrix}
\end{figure}

\subsection{Connectivity-aware Pruning} 
The goal of connectivity-aware pruning is to preserve the connectivity, such that the resulting pruned sparse linear layers maintain information flow without channel corruption. The design of this strategy is inspired by the construction of expander graphs, and the expansion properties of the resulting graphs will be analyzed rigorously in the next section. 

For each pruning group consisting of $M$ columns, the connectivity-aware pruning strategy is applied to multiple $M\times M$ blocks in two steps:

\vspace{2mm}
\noindent\textbf{Step 1: Diagonal selection.}
    To mitigate the channel corruption, we introduce the diagonal selection to ensure that each input and output neuron retains at least one connection after pruning. Instead of choosing a fixed diagonal, the proposed selection scheme seeks to retain large-magnitude weights, which are important for preserving the model performance. To this end, each block is first divided into $2 \times 2$ sub-blocks (i.e., quadrants). In each sub-block, the sum of weight magnitudes is computed along both the main diagonal and the anti-diagonal, and the diagonal with the larger sum is selected. The selected diagonals from the top-left and bottom-right sub-blocks are grouped as one pair, and those from the top-right and bottom-left as the other. We calculate the total sums of the selected diagonals for both pairs, and retain the pair with the larger sum by setting the corresponding entries in the binary pruning mask $\mathbf{M}$ to $1$. This step is illustrated in Figure~\ref{Selection-Matrix}, where blue values highlight the selected diagonals within each quadrant, and bold values indicate the retained pair. Through this selection, each input/output channel is guaranteed to retain at least one connection after pruning, which helps preserve connectivity across input and output neurons and reduces the risk of channel corruption.

\noindent \textbf{Step 2: RIA-based top $M-N-1$ selection.} 
    Following the diagonal selection, which preserves one weight per output channel (row), to satisfy the $N\!:\!M$ sparsity constraint, an additional $M\!-\!N\!-\!1$ weights are selected from each row based on their $\mathbf{RIA}$ scores. This step prioritizes weights that are critical to model performance, thereby reducing the risk of performance degradation caused by pruning.

\subsection{Importance-aware Partitioning}\label{importance-aware}
Note that the importance-aware pruning strategy focuses on retaining weights with higher \emph{quantitative} importance, while the connectivity-aware strategy emphasizes the structure, and henceforth the \emph{qualitative} importance of the weights. To achieve the best of the two worlds, we will use the importance-aware pruning strategy on the weights with higher quantitative importance to maximize the effect of importance-aware selection. Meanwhile, the connectivity-aware strategy will be applied to weights with lower quantitative importance, to solely focus on the structure of the graph. 
To this end, we develop a principled way to partition each pruning group into blocks of different importance.

Given the permuted weight matrix $\mathbf{W}$ obtained from channel permutation, the $\mathbf{RRI}$ scores are calculated according to Eq.~\eqref{eq:rri}.
For each pruning group, we compute the row-wise sum of the corresponding $\mathbf{RRI}$ scores and sort the rows in ascending order according to the obtained sum. An example of the importance-aware partitioning process can be found in Figure~\ref{row_normalize}. The connectivity-aware pruning will be applied to the $B$ blocks with the lowest aggregated $\mathbf{RRI}$ scores, where $B$ is a hyperparameter. Meanwhile, the importance-aware pruning will be applied on the rest rows with higher aggregated $\mathbf{RRI}$ scores. The simple yet effective partitioning mechanism helps identify blocks with different levels of importance within the pruning groups and therefore facilitates the choice of blocks for applying different pruning strategies mentioned above. 


\begin{figure}[t]
    \centering
    \includegraphics[width=0.75\textwidth]{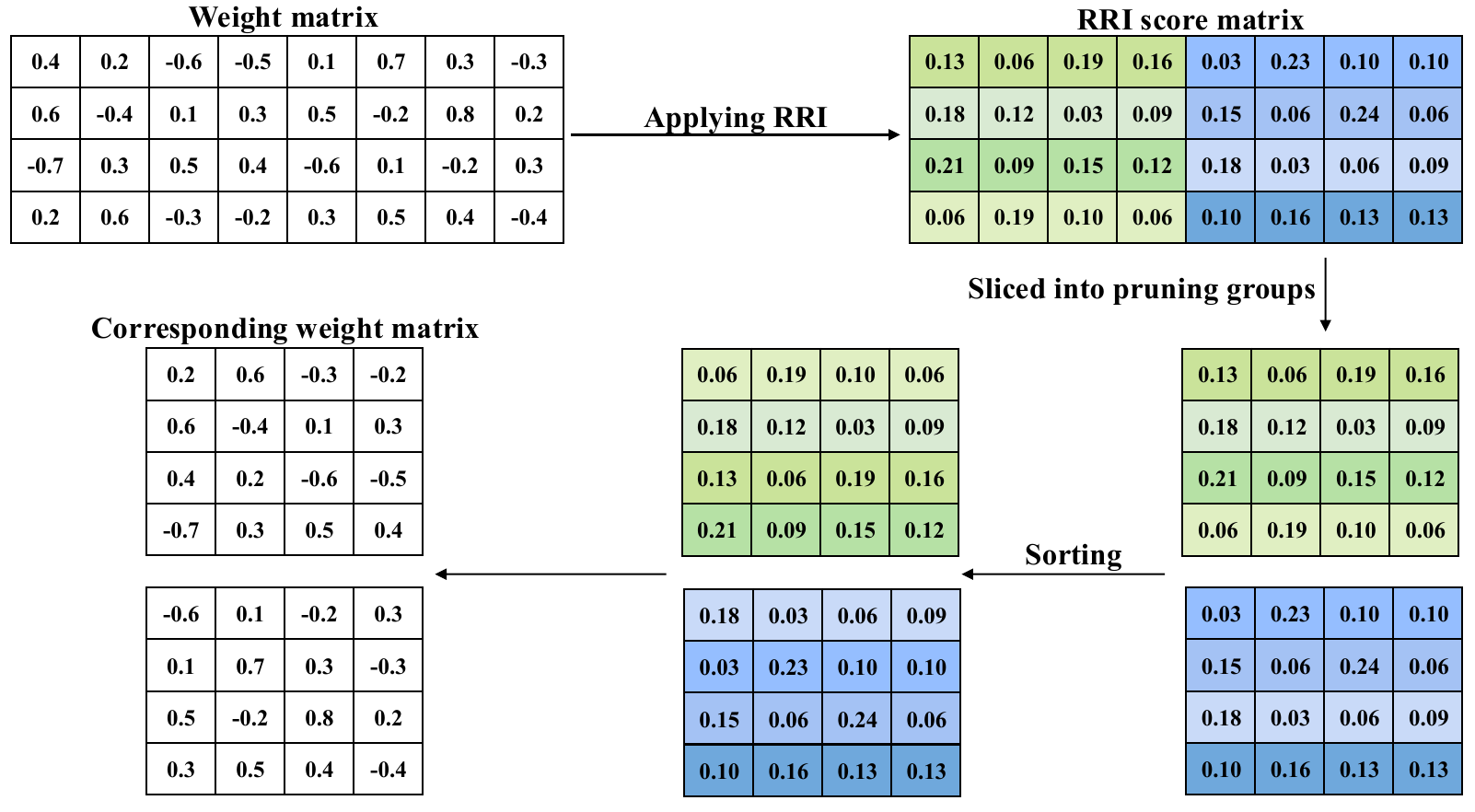}
    \caption{Example of applying importance-aware partitioning with $2\!:\!4$ sparsity on a $4 \times 8$ weight matrix.}
    \label{row_normalize}
\end{figure}

To summarize, the proposed EGGS-PTP framework achieves $N\!:\!M$ structured sparsity by two tailored pruning strategies. The importance-aware pruning strategy focuses on preserving weights with higher relative importance, while the connectivity-aware pruning aims to maintain graph connectivity. By combining them, 
the proposed framework can effectively preserve the important weights and maintain information flow within each linear layer. The overall pruning process is detailed in Algorithm~\ref{alg:eggs-ptp}.

\begin{algorithm}[t]
\caption{EGGS-PTP Pruning}
\label{alg:eggs-ptp}
\textbf{Input}: Weight matrix $\mathbf{W} \in \mathbb{R}^{F_\ell \times F_{\ell-1}}$, sparsity constraint $N\!:\!M$, number of blocks $B$ for Connectivity-aware Pruning\\
\textbf{Output}: Mask $\mathbf{M} \in\! \{0,1\}^{F_\ell \times F_{\ell-1}}$
\begin{algorithmic}[1]
\STATE Initialize $\mathbf{M} = \{0\}^{{F_{\ell} \times F_{\ell-1}}}$.
\STATE Compute $\mathbf{RIA}$ based on Eq.~\eqref{ria}
\STATE Channel permutation according to channel $\mathbf{RIA}$.
\STATE Compute $\mathbf{RRI}$ scores according to Eq.~\eqref{eq:rri}.
\STATE Partition $\mathbf{W}$ into $\frac{F_{\ell-1}}{M}$ pruning groups.
\FOR{each pruning group}
    \STATE Compute row-wise aggregated $\mathbf{RRI}$ scores and sort the rows in ascending order.
    \STATE Divide the pruning group into $M \times M$ blocks.
    \FOR{each $M \times M$ block}
    \IF{block index $< B$}
        \STATE \textbf{Diagonal Selection}: select one diagonal (main or anti-diagonal) per quadrant based on magnitude sum, then retain the pair of diagonals with the higher sum, and set the corresponding $\mathbf{M}_{ij}=1$.
        \STATE \textbf{RIA-based Selection}: retain additional $M\!-\!N\!-\!1$ entries per row based on top $\mathbf{RIA}$ scores, and set the corresponding $\mathbf{M}_{ij}=1$.
    \ELSE
        \STATE \textbf{RIA-based Selection}: retain top $M\!-\!N$ entries per row based on $\mathbf{RIA}$ scores, and set the corresponding $\mathbf{M}_{ij}=1$
    \ENDIF
\ENDFOR
\ENDFOR
\STATE \textbf{return} $\mathbf{M}$ 
\end{algorithmic}
\end{algorithm}

\section{Theoretical Analysis}
In this section, we rigorously analyze the structural properties of the pruned linear layers obtained from the EGGS-PTP framework. In particular, we show that the bipartite graph formed within each pruned linear layer is a two-sided expander, see Definition \ref{def:expander}. This is formalized in Lemma~\ref{lem:lemma1}.

\begin{lemma}\label{lem:lemma1}
    Let ${\tilde{\mathcal{G}}}:=(\mathcal{I} \cup \mathcal{O}, \tilde{\mathcal{E}})$ be the bipartite graph corresponding to the pruned weight matrix obtained from the EGGS-PTP framework. Then $\tilde{\mathcal{G}}$ is a two-sided $(c,a_I,a_O)$-expander for $c\in (0,1)$, with $a_I>1$ and $a_O>1$.
\end{lemma}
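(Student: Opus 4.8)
The plan is to exhibit, inside the pruned graph $\tilde{\mathcal{G}}$, a highly structured subgraph coming from the diagonal selection step and to read off the two-sided expansion from it, using the elementary monotonicity fact that adding the remaining $\mathbf{RIA}$-selected edges can only enlarge any neighborhood $\Gamma(\cdot)$. Concretely, I would first prove the key structural claim that in every connectivity-aware $M\times M$ block the diagonal selection retains a \emph{perfect matching} between the block's $M$ input columns and its $M$ output rows. This follows from the pairing rule: whichever of the two quadrant pairs $\{$top-left, bottom-right$\}$ or $\{$top-right, bottom-left$\}$ is retained, the two selected $2\times 2$ (sub-)diagonals together cover each of the $M$ rows exactly once and each of the $M$ columns exactly once, so the retained entries form a permutation submatrix. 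In particular every input and every output neuron touching a connectivity-aware block has degree at least one there, which already rules out the isolated vertices responsible for channel corruption.

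From this I would extract the degree bookkeeping needed for expansion. On the input side, each input neuron lives in a single pruning group and appears in all $B$ connectivity-aware blocks of that group; the $B$ matchings send it to $B$ \emph{distinct} output neurons, since distinct blocks occupy disjoint row sets. Hence, decomposing $\mathcal{T}\subseteq\mathcal{I}$ group-wise as $\mathcal{T}=\bigcup_g \mathcal{T}_g$, the matchings restricted to group $g$ are injective and land in $B$ disjoint output blocks, yielding the clean local bound $|\Gamma(\mathcal{T}_g)|\ge B\,|\mathcal{T}_g|$ inside that group. On the output side I would instead invoke the $N\!:\!M$ guarantee directly: every row carries exactly $M-N$ retained entries in each pruning group, and within a single connectivity-aware block the matching is a bijection, so a set $\mathcal{S}$ of outputs confined to one block has at least $|\mathcal{S}|$ matched input neighbors, with the extra $M\!-\!N\!-\!1$ $\mathbf{RIA}$ edges per row supplying the strict surplus. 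Choosing $c$ small forces every admissible $\mathcal{T}$ (resp. $\mathcal{S}$) to be sparse enough that these local counts dominate; taking $a_I,a_O$ slightly above $1$, driven by $B\ge 2$ on the input side (or, when $B=1$, by the surplus $\mathbf{RIA}$ edges in the remaining blocks) and by $M-N\ge 2$ on the output side, then delivers the two inequalities of Definition~\ref{def:expander}.

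The main obstacle is the passage from these per-group/per-block local bounds to a global bound on $|\Gamma(\mathcal{T})|$ for sets straddling several pruning groups, since output neurons are shared across groups and the neighborhoods $\Gamma(\mathcal{T}_g)$ may overlap. A purely matching-based count can in principle degrade: if the same low-$\mathbf{RRI}$ output neurons land in connectivity-aware blocks of many groups and the diagonal selections happen to align, the images of inputs from different groups could collide, shrinking the union. I would control this by (i) making $c$ small enough that any admissible $\mathcal{T}$ meets only few groups, so that $\bigcup_g \Gamma(\mathcal{T}_g)$ loses little to overlap, and (ii) budgeting the collisions through the per-output degree bound $d_o\le F_{\ell-1}/M$ together with the within-group block disjointness, converting the edge count $B|\mathcal{T}|$ into a neighborhood lower bound. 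Proving that this surviving bound still strictly exceeds $|\mathcal{T}|$, uniformly over all $\mathcal{T}$ with $|\mathcal{T}|\le c|\mathcal{I}|$, is the delicate step; the symmetric argument with the roles of $\mathcal{I}$ and $\mathcal{O}$, and of $a_I$ and $a_O$, exchanged then completes the two-sided claim.
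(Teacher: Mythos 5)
Your structural groundwork is correct and goes somewhat beyond what the paper makes explicit: the observation that the diagonal selection retains a permutation submatrix (perfect matching) in each connectivity-aware $M\times M$ block is right, as is the consequence that each input neuron acquires $B$ neighbors in $B$ row-disjoint blocks, and that each output neuron has degree exactly $\frac{F_{\ell-1}}{M}(M-N)$ from the $N\!:\!M$ constraint. However, your proposal as written has a genuine gap: you aim for the multiplicative bound $|\Gamma(\mathcal{T})|\ge B|\mathcal{T}|$ globally, correctly identify that neighborhoods of inputs from different pruning groups can collide on shared output neurons, and then defer the resolution to an unspecified ``collision budgeting'' argument that you explicitly flag as the delicate step but never carry out. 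A proof that ends by naming its hardest step as open is not a proof, and it is not clear your proposed overlap control (small $c$ forces few groups; per-output degree bound) actually closes it, since even a set meeting two groups could in principle have heavily overlapping images under your accounting.

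The gap is avoidable, and the paper avoids it entirely, because the lemma only asserts the existence of \emph{some} $c\in(0,1)$. The paper's proof is the degenerate minimum-degree argument: since every nonempty $\mathcal{T}\subseteq\mathcal{I}$ satisfies $|\Gamma(\mathcal{T})|\ge B$ (the degree of any single member), choosing $c_I<\frac{B}{F_{\ell-1}}$ forces $|\mathcal{T}|<B\le|\Gamma(\mathcal{T})|$, so $a_I=\frac{B}{B-1}>1$ works with no union or overlap analysis at all; symmetrically $c_O<\frac{F_{\ell-1}(M-N)}{F_\ell M}$ forces $|\mathcal{S}|<\frac{F_{\ell-1}}{M}(M-N)\le|\Gamma(\mathcal{S})|$, and $c=\min(c_I,c_O)$ finishes. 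You have every ingredient for this shortcut (you even note that ``choosing $c$ small forces every admissible $\mathcal{T}$ to be sparse enough''), but you condition your conclusion on the stronger per-set expansion rather than on the constant degree lower bound, which is all the statement requires. Two smaller issues: your fallback for $B=1$ via ``surplus $\mathbf{RIA}$ edges'' does not work on the input side, since the $\mathbf{RIA}$-based top-$(M\!-\!N\!-\!1)$ selection is performed per row and guarantees nothing about which columns receive those edges (for $B=1$ the lemma holds only vacuously, with $c$ so small that admissible $\mathcal{T}$ is empty); and your output-side claim that the extra $\mathbf{RIA}$ edges ``supply the strict surplus'' is similarly unjustified, since those edges may land on inputs already matched to other rows of $\mathcal{S}$.
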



\begin{proof}
Recall Definition~\ref{def:expander}, the pruned bipartite graph $\tilde{\mathcal{G}}$ is a two-sided $(c,a_I,a_O)$-expander if there exists $c \in (0,1)$ such that  $\forall \mathcal{T}\subseteq \mathcal{I}$ with $|\mathcal{T}| \leq c|\mathcal{I}|$, the neighborhood satisfies $|\Gamma(\mathcal{T})| \geq a_I|\mathcal{T}|$. Symmetrically, there exists $c \in (0,1)$ such that  $\forall \mathcal{S}\subseteq \mathcal{O}$ with $|\mathcal{S}| \leq c|\mathcal{O}|$, the neighborhood satisfies $|\Gamma(\mathcal{S})| \geq a_O|\mathcal{S}|$  .

To prove $a_I>1$, we consider any subset $T\subseteq\mathcal{I}$. According to Algorithm~\ref{alg:eggs-ptp}, the repeated diagonal selection across $B$ blocks in each pruning group ensures that every input neuron $v_i \in \mathcal{I}$ retains at least $B$ connections, yielding:\begin{equation}
    |\Gamma(\mathcal{T})|\geq B.
\end{equation}
Therefore, for any $\mathcal{T}\subseteq \mathcal{I}$ with $|\mathcal{T}|< B$, i.e., $|\mathcal{T}|\leq c_I|\mathcal{I}|=c_IF_{\ell-1}$ for some $c_I\in(0,\frac{B}{F_{\ell-1}})$, we obtain,
\begin{equation}\label{proof:a_I}
    a_I>1,
\end{equation}satisfying Eq.~\eqref{def:ai}.

In order to prove $a_O>1$, we consider a subset $\mathcal{S}\subseteq\mathcal{O}$. Under the $N\!:\!M$ sparsity constraint, each output neuron is guaranteed to retain exactly $M-N$ connections within each pruning group. According to EGGS-PTP, the weight matrix is divided into $\frac{F_{\ell-1}}{M}$ pruning groups; thus, after pruning, each output node has a degree of $deg(v_o)= \frac{F_{\ell-1}}{M}(M-N),~~ \forall v_o \in \mathcal {S}\subseteq  \mathcal{O}$. As a result, we always have
\begin{equation}
        |\Gamma(\mathcal{S})|\geq \frac{F_{\ell-1}}{M}(M-N).
\end{equation}
Therefore, for any $\mathcal{S}\subseteq \mathcal{O}$ with $|\mathcal{S}|< \frac{F_{\ell-1}}{M}(M-N)$, i.e., $|\mathcal{S}|\leq c_O|\mathcal{O}|=c_OF_\ell$ for some $c_O\in(0,\frac{F_{\ell-1}(M-N)}{F_\ell M})$
, we have\begin{equation}\label{proof:a_O}
    a_O>1,
\end{equation}satisfying Eq.~\eqref{def:ao}.
Combining Eq.~\eqref{proof:a_I} and Eq.~\eqref{proof:a_O}, we conclude that $\tilde{\mathcal{G}}$ is a two-sided expander for $c\in(0,\min(\frac{B}{F_{\ell-1}},\frac{F_{\ell-1}(M-N)}{F_\ell M}))$.
\end{proof}

Building on Lemma~\ref{lem:lemma1}, we conclude that each pruned linear layer is a two-sided expander. Therefore, the linear layers pruned by the EGGS-PTP framework inherit the expansion property from both the input and output sides.

\begin{remark}
   Lemma~\ref{lem:lemma1} reveals that the hyperparameter $B$ influences the expansion property, with a larger value of $B$ leading to a better expansion property. However, increasing $B$ also reduces the number of blocks over which the importance-aware pruning strategy was applied. Hence, $B$ determines the tradeoff between importance-aware (quantitative) and connectivity-aware (qualitative) pruning.
   In practice, $B$ is a hyperparameter that can be selected based on cross-validation.
\end{remark}

\begin{table}[t]
\centering
\fontsize{9pt}{12pt}\selectfont
\setlength{\tabcolsep}{3pt} 
\begin{tabular}{lccccc}
\hline
Method    & L2-7b & L2-13b & L3-8b & L-34b & L3.2-1b \\ \hline
Dense     & $5.47$   & $4.88$    & $6.13$   & $5.58$   & $9.75$     \\
Magnitude & $37.76$  & $8.88$    & $2400$   & $8.63$   & $3289.17$  \\
Wanda     & $11.35$  & $8.76$    & $24.83$  & $7.49$   & $70.85$    \\
RIA       & $10.41$  & $8.08$    & $21.47$  & $6.92$   & $68.35$    \\
EGGS-PTP  & $\mathbf{10.32}$ & $\mathbf{7.93}$ & $\mathbf{20.88}$ & $\mathbf{6.78}$ & $\mathbf{60.93}$ \\ \hline
\end{tabular}
\caption{Wikitext2 perplexity results under a $2\!:\!4$ sparsity pattern. L2: LLaMA2, L3: LLaMA3, L: LLaMA.}
\label{perplexity-table2:4}
\end{table}

\begin{table}[t]
\centering
\fontsize{9pt}{12pt}\selectfont
\setlength{\tabcolsep}{3pt}
\begin{tabular}{lccccc}
\hline
Method    & L2-7b & L2-13b & L3-8b & L-34b & L3.2-1b \\ \hline
Dense     & $5.47$   & $4.88$    & $6.13$   & $5.58$   & $9.75$     \\
Magnitude & $15.91$  & $7.32$    & $181.48$ & $7.77$   & $843.26$   \\
Wanda     & $8.42$   & $6.87$    & $14.94$  & $6.70$   & $42.27$    \\
RIA       & $8.11$   & $6.62$    & $13.60$  & $6.47$   & $39.77$    \\
EGGS-PTP  & $\mathbf{8.04}$ & $\mathbf{6.58}$ & $\mathbf{12.97}$ & $\mathbf{6.44}$ & $\mathbf{38.94}$ \\ \hline
\end{tabular}
\caption{Wikitext2 perplexity results under a $4\!:\!8$ sparsity pattern. L2: LLaMA2, L3: LLaMA3, L: LLaMA.}
\label{perplexity-table4:8}
\end{table}

\begin{table}[t]
\centering
\fontsize{9pt}{12pt}\selectfont
\setlength{\tabcolsep}{1pt} 
\begin{tabular}{lcccccc}
\hline
Method    & Hellas. & BoolQ & ARC-C & MNLI  & RTE   & AVG   \\ \hline
Dense     & $60.72$ & $79.45$ & $49.68$ & $38.94$ & $63.18$ & $58.39$ \\ \hline
Magni.    & $54.57$ & $69.79$ & $44.14$ & $29.15$ & $55.67$ & $51.26$ \\
Wanda     & $57.62$ & $78.68$ & $47.73$ & $\mathbf{37.48}$ & $64.77^*$ & $57.45$ \\
RIA       & $58.93$ & $79.62^*$ & $47.85$ & $36.52$ & $64.93^*$ & $57.65$ \\
EGGS-PTP  & $\mathbf{60.11}$ & $\mathbf{80.12}^*$ & $\mathbf{48.40}$ & $37.32$ & $\mathbf{65.21}^*$ & $\mathbf{58.23}$ \\ \hline
\end{tabular}
\caption{Zero-shot performance results for LLaMA-34b using a $2\!:\!4$ sparsity pattern. $^*$ indicates performance exceeding dense baseline, and Hellas indicates Hellaswag.}
\label{zeroshot-table2:4}
\end{table}

\begin{table}[t]
\centering
\fontsize{9pt}{12pt}\selectfont
\setlength{\tabcolsep}{2pt} 
\begin{tabular}{lcccccc}
\hline
Method    & Hellas. & BoolQ & ARC-C & MNLI  & RTE   & AVG   \\ \hline
Dense     & $60.72$ & $79.45$ & $49.68$ & $38.94$ & $63.18$ & $58.39$ \\ \hline
Magni.    & $56.20$ & $73.54$ & $45.23$ & $31.35$ & $57.03$ & $52.67$ \\
Wanda     & $58.89$ & $79.60^*$ & $48.16$ & $\mathbf{38.09}$ & $65.17^*$ & $57.95$ \\
RIA       & $59.11$ & $81.46^*$ & $48.58$ & $36.61$ & $65.52^*$ & $58.25$ \\
EGGS-PTP  & $\mathbf{60.34}$ & $\mathbf{81.89}^*$ & $\mathbf{49.21}$ & $37.94$ & $\mathbf{66.01}^*$ & $\mathbf{59.10}$ \\ \hline
\end{tabular}
\caption{Zero-shot performance results for LLaMA-34b using a $4\!:\!8$ sparsity. $^*$ indicates performance exceeding dense baseline, and Hellas indicates Hellaswag.}
\label{zeroshot-table4:8}
\end{table}

\begin{table}[t]
\centering
\fontsize{9pt}{12pt}\selectfont
\setlength{\tabcolsep}{6pt} 
\begin{tabular}{lcc}
\hline
Method        & Runtime (sec) & Speed-up \\ \hline
Dense         & $52.14$     & $1\times$ \\
Magnitude     & $51.51$     & $1.01\times$ \\
Wanda         & $51.21$     & $1.02\times$ \\
Wanda 2:4     & $31.79$     & $\mathbf{1.64}\times$ \\
RIA           & $51.48$     & $1.01\times$ \\
RIA 2:4       & $31.91$     & $1.63\times$ \\
EGGS-PTP 2:4  & $31.82$     & $\mathbf{1.64}\times$ \\ \hline
\end{tabular}
\vspace{1mm}
\caption{Runtime of Wikitext2 perplexity on LLaMA2-13b model.}
\label{runtime-table}
\end{table}

\section{Experiments}\label{Experiments}

\subsection{Setup}
We tested EGGS-PTP on a variety of models from both the LLaMA2 \cite{touvron2023llama} and LLaMA3 \cite{dubey2024llama} families. We used the HuggingFace transformers library \footnote{https://huggingface.co/meta-llama} and $4$ NVIDIA A40 GPUs with 24GB of memory each. For each of the tested LLMs, we apply the same pruning method to all linear layers except for the embeddings and the head. We tested both the perplexity as well as the zero-shot performance of the models.

\textbf{Baselines: }We have compared our model to other PTP methods in the experiments. We use the magnitude pruning \cite{han2015learning}, Wanda \cite{sun2023wanda}, and RIA as our naive baseline approaches \cite{zhang2024plugandplay}.

To evaluate EGGS-PTP, we performed 1-fold cross-validation on the validation set by tuning the hyperparameter $B$ over the range $[1, 20]$. The value of $B$ that yielded the best validation performance was then applied to the test set.



\subsection{Perplexity Results}




As highlighted in Tables~\ref{perplexity-table2:4} and ~\ref{perplexity-table4:8}, EGGS-PTP consistently outperforms other PTP methods across both sparsity patterns and all evaluated models. Notably, under $2:4$ sparsity, our method achieves an average $6.69\%$ improvement in preventing a performance drop of the dense model compared to RIA, while RIA improves upon Wanda by $17.08\%$. Similarly, under the $4:8$ sparsity pattern, our method demonstrates a $3.90\%$ reduction in performance drop over RIA, whereas RIA outperforms Wanda by $13.30\%$. As observed, the difference in performance improvement between EGGS-PTP and RIA is smaller than that between RIA and Wanda. This is because as performance approaches the dense baseline, it becomes challenging to achieve a significant increase. The experiments also showcase that the optimal value of $B$ generally increases with model size. It is also essential to note that, despite the promising performance of our method, smaller dense models often rival larger pruned models. For example, the dense LLaMA2 7B obtains a lower perplexity than the EGGS-PTP pruned LLaMA2 13B. This can be attributed to the strictness of $N\!:\!M$ sparsity patterns, which, while enabling computational efficiency, impose constraints that may limit the model's ability to fully retain its original performance.

\subsection{Zero-shot Results}



In Tables~\ref{zeroshot-table2:4} and ~\ref{zeroshot-table4:8}, we present the zero-shot performance of LLaMA 34B pruned by different PTP methods under $2:4$ and $4:8$ sparsity patterns. The results demonstrate that EGGS-PTP consistently outperforms existing methods across nearly all benchmarks. Notably, EGGS-PTP even surpasses the dense model on BoolQ and RTE, achieving $80.12$ on BoolQ compared to the dense model’s $79.45$. This improvement can be attributed to the reduction of overfitting, a common issue in dense models. By masking less important weights, EGGS-PTP simplifies the model, effectively addressing overfitting and enhancing generalization.

\subsection{Runtime Analysis}


In Table~\ref{runtime-table}, we analyze the runtime performance of computing perplexity on the Wikitext2 dataset using LLaMA2 13B. EGGS-PTP demonstrates a short runtime of $31.82$ seconds. Specifically, all PTP methods that employ $2:4$ sparsity achieve nearly identical speedups, with a $1.64\times$ speedup over the dense model, demonstrating the efficiency of structured $N\!:\!M$ sparsity in accelerating inference. In contrast, as indicated in the table, unstructured pruning methods show minimal speedups.

\section{Conclusion}
This paper introduced EGGS-PTP, an Expander-Graph Guided Structured Post-Pruning Method for Large Language Models under the $N\!:M\!$ sparsity constraints. By integrating the expander graph theory into the pruning process, EGGS-PTP ensures that the resulting pruned networks preserve strong connectivity and robust information flow. The method combines two complementary pruning strategies: importance-aware pruning, which retains weights with high RIA, and connectivity-aware pruning, which enforces expansion-like structure via diagonal selection. A principled partitioning mechanism is proposed to determine where each strategy is applied, striking a balance between the quantitative and qualitative importance. Experimental results demonstrate that EGGS-PTP outperforms existing structured pruning techniques in terms of both perplexity and zero-shot accuracy while ensuring efficiency. Overall, EGGS-PTP offers a principled and effective approach for compressing LLMs.


\begin{ack}
Work in the paper is supported by NSF ECCS 2412484.
\end{ack}

\bibliographystyle{plainnat}
\bibliography{references}
\end{document}